\def\eqref#1{equation~\ref{#1}}
\def\1{\bm{1}}
\DeclareMathAlphabet{\mathsfit}{\encodingdefault}{\sfdefault}{m}{sl}
\SetMathAlphabet{\mathsfit}{bold}{\encodingdefault}{\sfdefault}{bx}{n}
\DeclareMathOperator*{\argmin}{arg\,min}
\theoremstyle{plain}
\newtheorem{theorem}{Theorem}[section]
\newtheorem{proposition}[theorem]{Proposition}
\theoremstyle{definition}
\theoremstyle{remark}
\tiny\color{gray},
\title{Embarrassingly Simple Text Watermarks}
\author{\name Ryoma Sato \email r.sato@ml.ist.i.kyoto-u.ac.jp \\
      \addr Kyoto University\\
      Okinawa Institute of Science and Technology
      \AND
      \name Yuki Takezawa \email yuki-takezawa@ml.ist.i.kyoto-u.ac.jp \\
      \addr Kyoto University\\
      Okinawa Institute of Science and Technology
      \AND
      \name Han Bao \email bao@i.kyoto-u.ac.jp\\
      \addr Kyoto University\\
      Okinawa Institute of Science and Technology
      \AND
      \name Kenta Niwa \email kenta.niwa.bk@hco.ntt.co.jp \\
      \addr NTT Communication Science Laboratories
      \AND
      \name Makoto Yamada \email makoto.yamada@oist.jp\\
      \addr Okinawa Institute of Science and Technology}
\begin{document}

\maketitle

\begin{abstract}
  We propose \textsc{Easymark}, a family of embarrassingly simple yet effective watermarks. Text watermarking is becoming increasingly important with the advent of Large Language Models (LLM). LLMs can generate texts that cannot be distinguished from human-written texts. This is a serious problem for the credibility of the text. \textsc{Easymark} is a simple yet effective solution to this problem. \textsc{Easymark} can inject a watermark without changing the meaning of the text at all while a validator can detect if a text was generated from a system that adopted \textsc{Easymark} or not with high credibility. \textsc{Easymark} is extremely easy to implement so that it only requires a few lines of code. \textsc{Easymark} does not require access to LLMs, so it can be implemented on the user-side when the LLM providers do not offer watermarked LLMs. In spite of its simplicity, it achieves higher detection accuracy and BLEU scores than the state-of-the-art text watermarking methods. We also prove the impossibility theorem of perfect watermarking, which is valuable in its own right. This theorem shows that no matter how sophisticated a watermark is, a malicious user could remove it from the text, which motivate us to use a simple watermark such as \textsc{Easymark}. We carry out experiments with LLM-generated texts and confirm that \textsc{Easymark} can be detected reliably without any degradation of BLEU and perplexity, and outperform state-of-the-art watermarks in terms of both quality and reliability. 
\end{abstract}

\section{Introduction}

With the advent of large language models (LLMs)~\citep{brown2020language,openai2023gpt4}, text watermarking is becoming increasingly important~\citep{kirchenbauer2023watermark,zhao2023provable,abdelnabi2021adversarial}. The quality of texts generated by LLMs is so high that it is difficult to distinguish them from human-written texts \citep{clark2021all,jakesch2023human,sadasivan2023can}. This increases the risk of abuse of automatically generated text. For example, a malicious user can generate a fake news article and spread it on social media. Some users may automatically generate large numbers of blog posts and try to earn advertising fees. Some students may use LLMs to generate essays and submit them to their teachers. In order to prevent such abuses, it is important to be able to detect automatically generated texts.

\cite{kirchenbauer2023watermark} proposed a method to have LLMs generate watermarked text. A basic idea is to split a vocabulary into red and green words. LLMs are forced to generate many green words. A validator can detect if a text was generated from a system that adopted this method or not by checking the ratio of green words. If a text contains too many green words to be generated by a human, the text is considered to be generated by an LLM. This method is effective in detecting automatically generated texts. \cite{takezawa2023necessary} elaborated this idea by precisely controlling the number of green words.

However, adding these watermarks harms the quality of the generated texts because the LLMs are forced to generate less diverse texts to include a sufficient number of green words. Specifically, it has been observed that the BLEU score and perplexity are degraded when the watermarks are added~\citep{takezawa2023necessary}. This fact hinders LLM vendors such as OpenAI and Microsoft from adopting text watermarks because the quality greatly affects user experience. Besides, incorporating these watermarks requires a lot of engineering effort, which makes practitioners all the more hesitate to adopt them. Worse, these watermarks require steering the LLM. Therefore, users cannot enjoy watermarked LLMs until the LLM providers adopt them. This is a serious problem because LLM providers may not adopt them for the above reasons.

To overcome these problems, we propose \textsc{Easymark}, a family of embarrassingly simple text watermarking methods that exploit the specifications of character codes. \textsc{Easymark} perfectly resolves the above concerns. First, \textsc{Easymark} does not degrade the quality of texts at all. The watermarked text looks the same as the original text. The degradation of BLEU and perplexity is literally zero. Second, \textsc{Easymark} is extremely easy to implement with only a few lines of code. \textsc{Easymark} is a plug-and-play module, and one does not need to modify the decoding program, while the methods proposed by \cite{kirchenbauer2023watermark} and \cite{takezawa2023necessary} need to modify the decoding algorithms. Therefore, \textsc{Easymark} can be implemented on the user's side. For example, \textsc{Easymark} can be installed on highschool computers as a browser add-on, and teachers can use watermarks even if LLM providers do not implement it. Note that \cite{hacker2023regulating} called for ``markings that are easy to use and recognize, but hard to remove by average users,'' \cite{grinbaum2022ethical} called for ``unintrusive, yet easily accessible marks of the machine origin'', and \textsc{Easymark} meets these requirements. These advantages make \textsc{Easymark} a practical method of text watermarking. \textsc{Easymark} is also a good starter before adopting more sophisticated methods. We also note that the approach of \textsc{Easymark} is orthogonal to the existing LLM watermarks, and can be combined with them to reinforce the reliability of the text watermarking.

We carry out experiments with LLM-generated texts and we confirm that \textsc{Easymark} can be detected reliably without any degradation of BLEU and perplexity.

You can try \textsc{Easymark} at \url{https://easymarkdemo.github.io/} in a few seconds. We encourage the readers to try it.

The contributions of this paper are as follows:

\begin{itemize}
    \item We propose \textsc{Easymark}, a family of embarrassingly simple text watermarking methods that exploit different Unicode codepoints that have the same meanings. \textsc{Easymark} does not degrade the quality of texts at all and is extremely easy to implement.
    \item \textsc{Easymark} includes variants for printed and CJK texts, which extend the applicability of \textsc{Easymark}.
    \item We prove the impossibility theorem of perfect watermarking, which is valuable in its own right. This theorem shows any reliable watermark, including an elaborated one, can be removed by a malcious user. This theorem motivates us to use simple watermarks like \textsc{Easymark} because we cannot avoid the vulnerability with even elaborated watermarks.
    \item We carry out experiments with LLM-generated texts and we confirm that \textsc{Easymark} can be detected reliably without any degradation of BLEU and perplexity.
\end{itemize}

\section{Problem Setting}

The goal is to design two functions, \texttt{add\_watermark} and \texttt{detect\_watermark}. \texttt{add\_watermark} takes a raw text as input and returns a watermarked text. \texttt{detect\_watermark} takes a text as input and returns a boolean value indicating whether the text was generated from a system that adopted \texttt{add\_watermark} or not. The requirements for these functions are (i) \texttt{add\_watermark} should not change the meaning of the text, and (ii) \texttt{detect\_watermark} should be able to detect the watermark with high credibility. LLM providers can incorporate our method as a plug-and-play module that is passed through after an LLM generates texts. We do not pose any assumptions on the input texts to be watermarked, while some existing methods can only add watermarks to LLM-generated texts \citep{kirchenbauer2023watermark,takezawa2023necessary}. Therefore, our problem setting is more general than the existing ones. For example, a human news writer can use our method to add watermarks to their articles.

\section{Proposed Method}

We propose a family of easy text watermarks, \textsc{Easymark}, which exploits the fact that Unicode has many codepoints with the same or similar appearances. \textsc{Easymark} has three variants, \textsc{Whitemark}, \textsc{Variantmark}, and \textsc{Printmark}, which are suitable for different scenarios. \textsc{Whitemark} is the easiest one and is suitable for digital texts. \textsc{Variantmark} is suitable for texts with Chinese characters. \textsc{Printmark} is suitable for printed texts. We will explain each variant in detail in the following subsections.

\subsection{Embarassingly Easy Watermark (Whitemark)}

\begin{figure}[tb]
  \centering
  \includegraphics[width=0.6\hsize]{./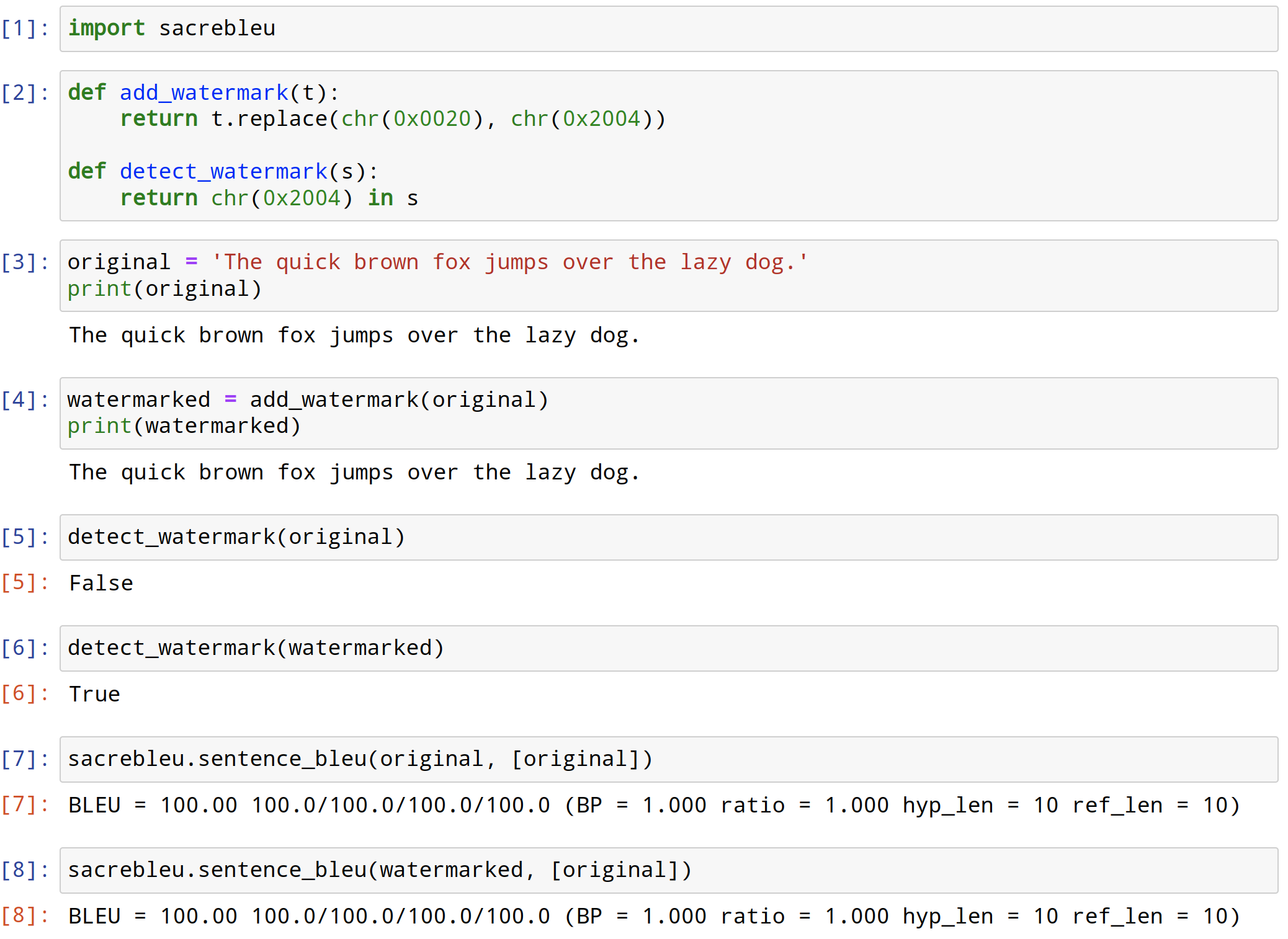}
  \caption{A screenshot of \textsc{Whitemark} on Jupyter noteook. \texttt{[2]}: The implementation of the \textsc{Whitemark} algorithm. \texttt{[3]} \texttt{[4]}: The original text and watermarked text look the same. \texttt{[5]} \texttt{[6]}: The watermarked text can be detected by the \texttt{detect\_watermark} function. \texttt{[7]} \texttt{[8]}: The Sacrebleu library identifies the original and watermarked texts.}
  \label{fig: execution}
\end{figure}

\begin{lstlisting}[caption={A function to add a watermark and detect it.}, label={lst: watermark}]
  def add_watermark(t):
      return t.replace(chr(0x0020), chr(0x2004))
  
  def detect_watermark(s):
      return chr(0x2004) in s
\end{lstlisting}

\textsc{Whitemark} is the simplest method that exploits the fact that Unicode has many codepoints for whitespace and replaces a whitespace (U+0020) with another codepoint of a whitespace, e.g., U+2004. The existence of \textsc{Whitemark} can be detected by counting the number of U+2004 in the texts. \textsc{Whitemark} does not change the meaning of a text at all. Listing \ref{lst: watermark} shows the Python code of \textsc{Whitemark}. An example of execution is shown in Figure \ref{fig: execution}. The appearance of a text does not change with \textsc{Whitemark}. The Sacrebleu library~\citep{post2018call} and SentencePiece library~\citep{kudo2018sentencepiece} identify the raw text and the text watermarked by \textsc{Whitemark}. These observations indicate \textsc{Whitemark} does not change the contents at all. Nevertheless, the \texttt{detect\_watermark} succeeds in detecting the watermark. Note that \textsc{Whitemark} does not disappear by electrical copy and paste.

\textsc{Whitemark} has the following preferable properties:

\begin{proposition} \label{prop: bleu}
  The BLEU scores and perplexity of the raw text and the text with \textsc{Whitemark} are the same.
\end{proposition}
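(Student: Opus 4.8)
The plan is to reduce both quantities to functions of the \emph{tokenized} representation of a text and then to argue that this representation is invariant under the substitution performed by \textsc{Whitemark}. Neither metric inspects raw character strings directly: a BLEU score is a function of the $n$-gram multisets extracted from the tokenized candidate and reference, while perplexity equals $\exp\!\left(-\frac{1}{N}\sum_{i} \log p(w_i \mid w_{<i})\right)$, where $(w_1,\dots,w_N)$ is the tokenization of the text under the language model's tokenizer. Hence it suffices to show that the relevant tokenizers map the raw text $t$ and the watermarked text $\texttt{add\_watermark}(t)$ to the \emph{identical} token sequence; equality of the two scores then follows immediately because they are deterministic functions of that sequence.

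First I would establish this invariance for the subword tokenizer used to compute perplexity. The key observation is that \textsc{Whitemark} only replaces U+0020 by U+2004, and U+2004 carries a Unicode compatibility decomposition to U+0020. Standard subword tokenizers (e.g., SentencePiece) apply NFKC normalization as a preprocessing step, which collapses U+2004 back to U+0020 before any segmentation occurs. Consequently the normalized string, and therefore the resulting token sequence together with every conditional probability $p(w_i \mid w_{<i})$, is literally the same for $t$ and for its watermarked version, so the two perplexities coincide.

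Next I would handle BLEU. Here the relevant tokenizer (e.g., the one inside SacreBLEU) treats the Unicode whitespace class as token delimiters rather than as content, so both U+0020 and U+2004 are consumed as separators and neither appears inside a token. Replacing one whitespace codepoint by another therefore yields exactly the same sequence of word tokens, hence the same clipped $n$-gram precisions and the same brevity penalty (the token count is unchanged), and thus the same BLEU score; when a normalization step is applied instead, the NFKC argument above delivers the same conclusion.

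The main obstacle is that the statement is, strictly speaking, contingent on the tokenizer: the equality holds precisely because the tokenizers in question either normalize U+2004 to U+0020 or treat every whitespace codepoint as an identical delimiter. I would therefore make this dependence explicit, phrasing the claim for any tokenizer that is invariant under NFKC normalization or that splits on the Unicode whitespace class, and then verify that the concrete tools named in the paper (SacreBLEU, SentencePiece) satisfy this property, which is exactly what the execution trace in Figure~\ref{fig: execution} illustrates empirically.
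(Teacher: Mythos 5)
Your proposal is correct, and it is a genuinely more careful argument than the one the paper gives. The paper's entire proof is a single sentence --- ``as \textsc{Whitemark} does not change any printable characters, this does not affect the BLEU score and perplexity'' --- which asserts invariance at the level of visible content and leaves the mechanism implicit. You instead reduce both metrics to deterministic functions of a tokenized representation and prove that the tokenization itself is invariant under the U+0020 $\to$ U+2004 substitution, via two distinct mechanisms: NFKC compatibility decomposition collapsing U+2004 back to U+0020 before subword segmentation (for perplexity), and whitespace-class delimiting consuming both codepoints as separators (for BLEU). What this buys you is precision about the boundary of validity: the proposition is not a theorem about strings in the abstract but about strings \emph{as seen by particular tokenizers}, and a byte-level tokenizer with no normalization step would in fact assign different tokens (hence a different perplexity) to the watermarked text. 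The paper only supports this point empirically --- it reports that SacreBLEU and SentencePiece ``identify'' the raw and watermarked texts (Figure \ref{fig: execution}) --- whereas you make the required tokenizer property an explicit hypothesis and then check that the named tools satisfy it. That is the honest way to state the proposition, and your version would strengthen the paper; the only thing to watch is that ``standard subword tokenizers apply NFKC'' is a default, not a universal, so the conditional phrasing in your final paragraph is the right one to keep.
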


\begin{proof}
  As \textsc{Whitemark} does not change any printable characters, this does not affect the BLEU score and perplexity.
\end{proof}

\begin{proposition} \label{prop: detect}
  If the original text has a whitespace \textup{(U+0020)} and does not have any \textup{U+2004}, the text with \textsc{Whitemark} can be detected with 100\% accuracy. More precisely, let $p_{\textup{nat}}(x)$ be the probability distribution of natural texts and let $p_{\textup{in}}(x)$ be the probability distribution of a text $x$ to be input to \textup{\textsc{Whitemark}}. Let \begin{align}
    \delta_{\textup{FP}} &\stackrel{\textup{def}}{=} 1 - \textup{Pr}_{x \sim p_{\textup{nat}}(x)}[x \textup{ does not contain \textup{U+2004}}], \\
    \delta_{\textup{FN}} &\stackrel{\textup{def}}{=} 1 - \textup{Pr}_{x \sim p_{\textup{in}}(x)}[x \textup{ contains a whitespace}].
  \end{align} Then, \begin{align}
    \textup{Pr}_{x \sim p_{\textup{nat}}(x)}[\texttt{\textup{detect\_watermark}}(x) = \texttt{\textup{False}}] &\ge 1 - \delta_{\textup{FP}}, \label{eq: detect-1} \\
    \textup{Pr}_{x \sim p_{\textup{in}}(x)}[\texttt{\textup{detect\_watermark}}(\texttt{\textup{add\_watermark}}(x)) = \texttt{\textup{True}}] &\ge 1 - \delta_{\textup{FN}}. \label{eq: detect-2}
  \end{align}
\end{proposition}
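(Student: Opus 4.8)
The plan is to derive both inequalities by directly unwinding the definitions of the two functions in Listing~\ref{lst: watermark}, since each bound reduces to a restatement of the corresponding $\delta$ quantity rather than requiring any nontrivial estimation.

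First I would handle the false-positive bound~\eqref{eq: detect-1}. By construction, \texttt{detect\_watermark} returns \texttt{False} on an input exactly when that input contains no U+2004 codepoint. Hence, under $p_{\textup{nat}}$, the event that detection returns \texttt{False} is identical to the event that $x$ does not contain U+2004, whose probability is $1 - \delta_{\textup{FP}}$ by the very definition of $\delta_{\textup{FP}}$. This establishes~\eqref{eq: detect-1} with equality, and hence in particular the stated inequality.

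Next I would establish the false-negative bound~\eqref{eq: detect-2}. The key observation is that \texttt{add\_watermark} replaces every occurrence of the whitespace U+0020 by U+2004 and leaves all other characters untouched. Consequently, whenever $x$ contains at least one whitespace, the watermarked text \texttt{add\_watermark}$(x)$ necessarily contains U+2004, so \texttt{detect\_watermark} returns \texttt{True}. In probabilistic terms under $p_{\textup{in}}$, the event ``$x$ contains a whitespace'' is contained in the event ``detection succeeds,'' so the probability of the former, namely $1 - \delta_{\textup{FN}}$ by the definition of $\delta_{\textup{FN}}$, is a lower bound for the detection probability. This is exactly~\eqref{eq: detect-2}.

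The only subtle point---and the reason~\eqref{eq: detect-2} is an inequality rather than an equality---is that detection also succeeds when $x$ contains no whitespace but already carries a U+2004, since \texttt{add\_watermark} never deletes a pre-existing U+2004. Thus the success event is in general strictly larger than the whitespace event, which is precisely what the one-sided inequality accommodates. No heavy machinery is needed anywhere; the entire argument is careful bookkeeping of which codepoints \texttt{add\_watermark} introduces and which it leaves in place, so I expect the only real care to be in stating the event inclusion cleanly and in correctly identifying the detection event with the codepoint-membership event.
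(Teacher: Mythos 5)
Your argument is correct and follows essentially the same route as the paper's own proof: condition on the event defining each $\delta$, unwind the definitions of \texttt{add\_watermark} and \texttt{detect\_watermark}, and conclude by monotonicity of probability. The extra remark on why \eqref{eq: detect-1} is an equality while \eqref{eq: detect-2} is only an inclusion of events is a nice clarification but does not change the substance.
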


\begin{proof}
  Suppose $x$ does not contain U+2004. This event holds with probability $1 - \delta_{\text{FP}}$ under $p_{\text{nat}}(x)$. As $x$ does not contain U+2004, \texttt{detect\_watermark}$(x)$ returns \texttt{False} under this condition. Thus, Eq. \ref{eq: detect-1} holds.
  
  Suppose $x$ contains a whitespace. This event holds with probability $1 - \delta_{\text{FN}}$ under $p_{\text{in}}(x)$. Under this condition, \texttt{add\_watermark}$(x)$ contains U+2004 because $x$ contains a whitespace. Therefore, \texttt{detect\_watermark}(\texttt{add\_watermark}$(x))$ returns \texttt{True}. Thus, Eq. \ref{eq: detect-2} holds.
\end{proof}

As $\delta_{\text{FP}}$ and $\delta_{\text{FN}}$ are extremely low in real-world scenarios, Proposition \ref{prop: detect} indicates \textsc{Whitemark} can be detected almost perfectly.
We confirmed that $\delta_{\text{FP}} = 0$ held in the WMT-14 dataset. We also confirmed that $\delta_{\text{FN}} = 0$ held in the $100$ texts generated by GPT-3.5, NLLB-200 \citep{costajussa2022no}, and LLaMA \citep{touvron2023llama}. More experimental results can be found in Section \ref{sec: experiment}.

Besides, \textsc{Whitemark} is robust to manual edit. \textsc{Whitemark} can be detected even after the text is edited unless all of the whitespaces are edited, which usually happens when a user rewrites the entire text, in which case we do not need to claim the text is generated from an LLM. We will return to this point in Section \ref{sec: limitation}.

Another preferable property of \textsc{Whitemark} is that it can be implemented in a streaming manner. Many LLM applications require real-time generation of texts. For example, chatbots like ChatGPT respond to a user's message as soon as tokens are generated. Elaborated watermark algorithms such as the dynamic programming of NS-Watermark~\citep{takezawa2023necessary} cannot inject a watermark into a text in a streaming manner because they require the entire text to be generated before adding a watermark. By contrast, \textsc{Whitemark} is suitable for real-time applications.

Last but not least, \textsc{Whitemark} can be implemented on the user-side \citep{sato2022clear,sato2022towards} because it does not require access to the LLM while the existing methods require the LLM provider to implement them. The LLM provider may hesitate to adopt the existing methods because they require a lot of engineering effort and may degrade the quality of the texts. In that case, what users can do is to wait for the LLM provider to adopt the methods. By contrast, users can introduce \textsc{Whitemark} on their own. For example, highschool teachers can install \textsc{Whitemark} as a browser add-on and use it to detect automatically generated essays even if the LLM provider does not implement it.

\subsection{Steganography}

\begin{figure}[tb]
  \centering
  \includegraphics[width=\hsize]{./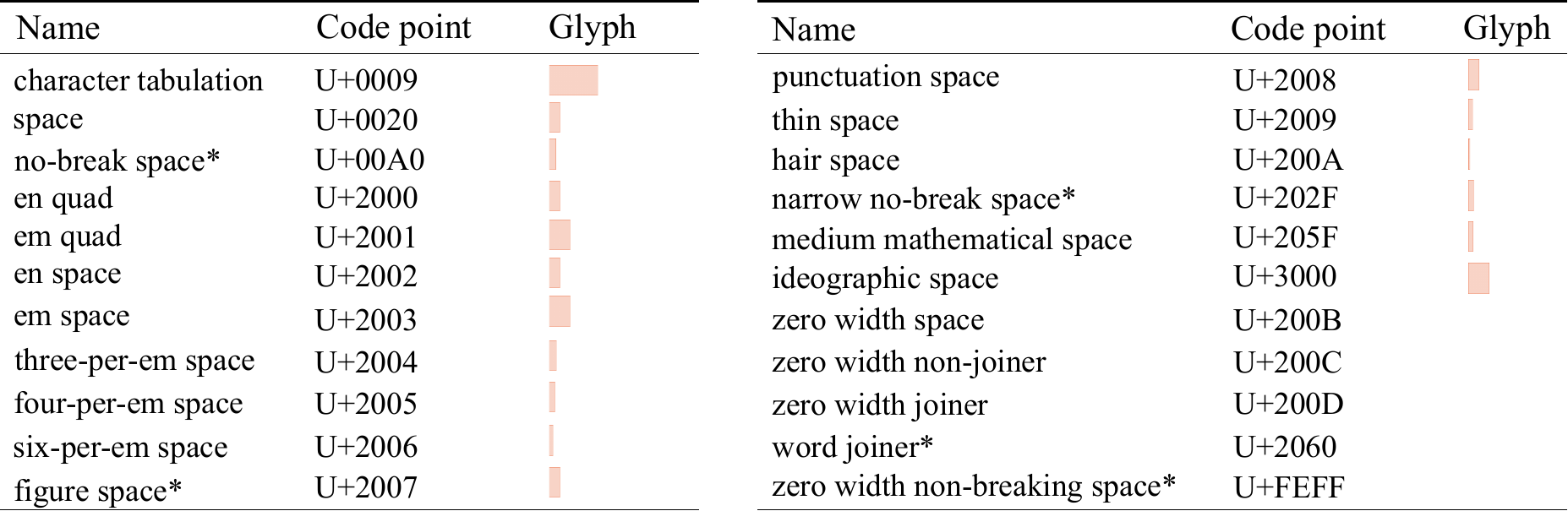}
  \caption{A list of whitespaces. The red bars indicate the lengths of the whitespaces. These codepoints can be used for steganography and watermarks. *: no-break space.}
  \label{fig: whitespaces}
\end{figure}

\begin{algorithm}[tb]
  \DontPrintSemicolon
  \caption{Steganography Encode} \label{alg: steganography-encode}
  \KwInput{A text $x$ in a Unicode sequence; A secret message $m \in \mathbb{Z}_{\ge 0}$; A list of codepoints $[u_0, \dots, u_{p-1}]$.}
  \KwOutput{A text that is embedded the secret message in a Unicode sequence.}
  \BlankLine
  $m_1, \dots, m_k \gets$ the $p$-ary form of $m$ \tcp*{$m_i \in \{0, 1, \ldots, p-1\}$}
  $i \gets 1$ \tcp*{The index of $m$}
  \For{$c$ \textup{\textbf{in}} $x$}{
      \If{$c$ \textup{is a whitespace}}{
          $c \gets u_{m_i}$ \tcp*{Replace a whitespace with a codepoint in $C$}
          $i \gets i + 1$ \tcp*{Increment the index}
          \If{$i > k$}{
              \textbf{break} \tcp*{Finish if the secret message is embedded}
          }
      }
  }
  \If{$i \le k$}{
      \textbf{raise} \texttt{ValueError} \tcp*{Raise an error if the secret message is too long}
  }
  \Return{$x$}
\end{algorithm}

\begin{algorithm}[tb]
  \DontPrintSemicolon
  \caption{Steganography Decode} \label{alg: steganography-decode}
  \KwInput{A text $x$ that is embedded a secret message in a Unicode sequence; A list of codepoints $[u_0, \dots, u_{p-1}]$.}
  \KwOutput{The secret message $m \in \mathbb{Z}_{\ge 0}$.}
  \BlankLine
  $m \gets 0$ \tcp*{Initialize the secret message}
  \For{$c$ \textup{\textbf{in}} $x$}{
    \For{$i = 0, \ldots, p-1$}{
      \If{$c = u_i$}{
          $m \gets p \times m + i$ \tcp*{Update the secret message}
      }
    }
  }
  \Return{$m$}
\end{algorithm}

\textsc{Whitemark} can be used as a steganography method. A user can embed a secret message in a text by choosing the places of whitespaces to be replaced. For example, if the secret message is \texttt{1101001} in binary, the first, second, fourth, and seventh whitespaces are replaced with U+2004. It should be noted that secret messages can be encrypted by standard encrypted methods such as AES and then fed to steganography so that the secret message is not read by adversaries. Secret messages can also be encoded by an error correction code and then fed to steganography to make the steganography more robust to edit. For example, suppose the message a user wants to embed is \begin{align}
  \texttt{x = 10100100001}.
\end{align} Encode it with an error correction code and suppose the encoded message is \begin{align}
  \texttt{x' = 001001000100001}.
\end{align} Replace the whitespaces with U+2004 based on the non-zero indices of \texttt{x'}. Even if someone edits the text and the whitespace pattern is changed to \begin{align}
  \texttt{x'' = 001001010100001},
\end{align} the validator can confirm \begin{align}
  \texttt{x = 10100100001}
\end{align} by decoding \texttt{x''} with the error correction code. The steganography technique also makes \textsc{Whitemark} more robust as a watermarking method. Each user can choose a different pattern and detect the watermark with more reliability.

This idea can easily be extended to other codepoints. Figure \ref{fig: whitespaces} lists codepoints for whitespaces. Each user can select a different set of codepoints for a watermark, and embed a secret message in a $p$-ary form, where $p$ is the number of codepoints in the set. Algorithms \ref{alg: steganography-encode} and \ref{alg: steganography-decode} show the pseudo-code of the steganography.

\subsection{Watermarks for CJK Texts (Variantmark)}

\begin{figure}[tb]
  \centering
  \includegraphics[width=\hsize]{./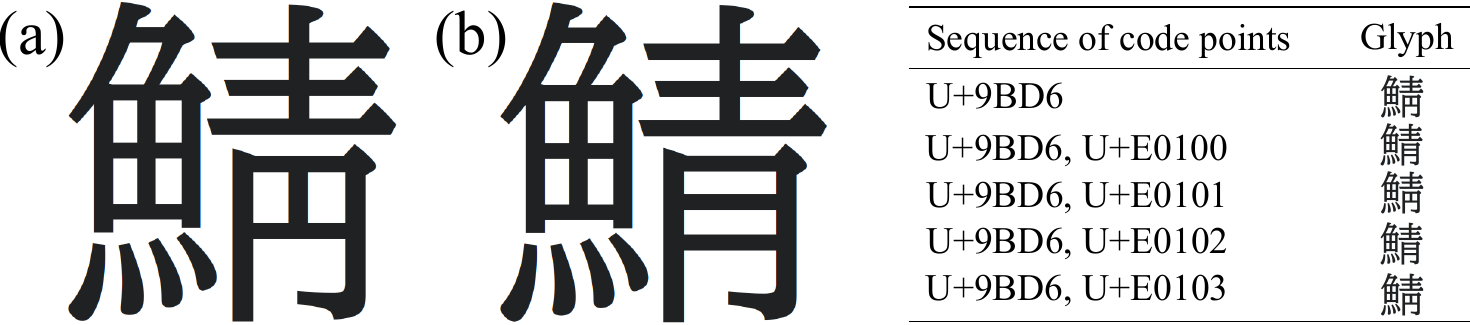}
  \caption{Variant characters and variation selectors. Both (a) and (b) have the meaning of mackerel. Although these characters are exchangeable in most scenarios, Unicode supports distinguishing them by variation selectors for some special purposes. The right table shows the list of Unicode sequences for the characters. These variations can be used for watermarking.}
  \label{fig: saba}
\end{figure}

\begin{figure}[tb]
  \centering
  \includegraphics[width=\hsize]{./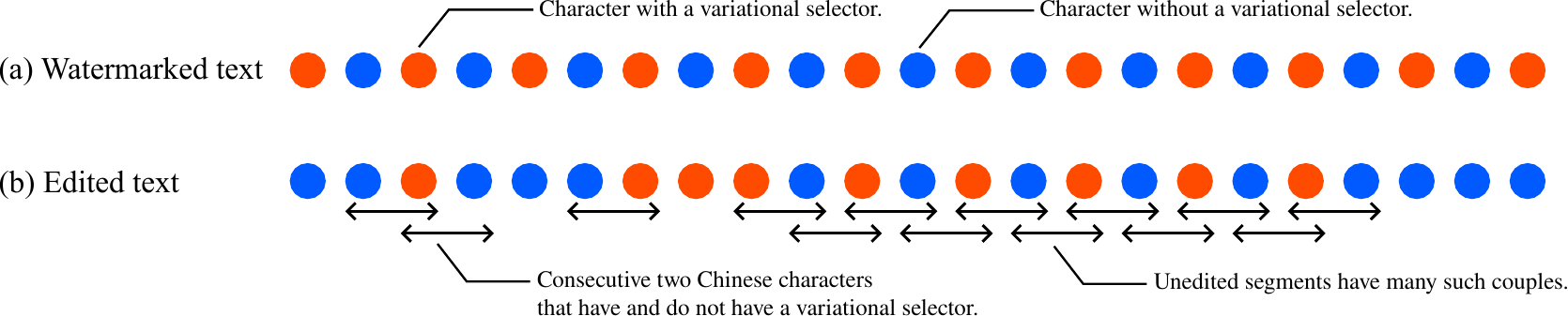}
  \caption{An illustration of \textsc{Variantmark}. (a) A watermarked text has alternating patterns. A red circle indicates a character with a variational character, and a blue circle indicates a character without a variational character. (b) The patterns remain even if the text is edited to some extent. }
  \label{fig: variantmark}
\end{figure}

\begin{figure}[tb]
  \centering
  \includegraphics[width=0.8\hsize]{./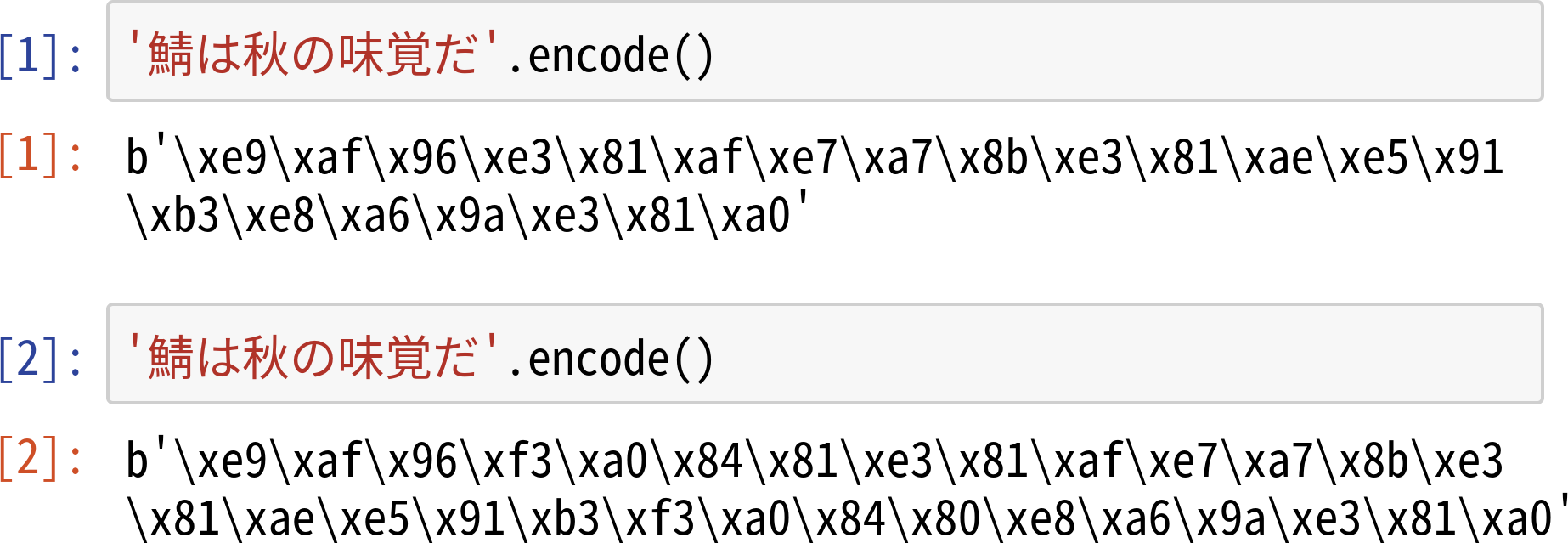}
  \caption{A screenshot of \textsc{Variantmark} for Japanese texts. Although the original text and watermarked text look the same, the Unicode sequences are different.}
  \label{fig: saba_comparison}
\end{figure}

Although European languages including English entail many whitespaces as delimiters, CJK languages such as Chinese and Japanese have few whitespaces. Therefore, \textsc{Whitemark} cannot be directly applied to CJK texts. We propose \textsc{Variantmark} for CJK texts. The idea is to use the variation selectors of Unicode. Some Chinese characters have variations with the same meaning, and Unicode supports specifying the variation by special code points. Figure \ref{fig: saba} shows an example of such characters. Figure \ref{fig: saba} (a) is the basic character for mackerel and has the codepoint U+9BD6. Figure \ref{fig: saba} (b) is a variant character. Unicode supports distinguishing these characters by variation selectors. The right table shows the list of Unicode sequences for the characters. These variations can be used for watermarking. A user can embed a secret message by choosing the variation selectors. The important fact is that Figure \ref{fig: saba} (a) can also be specified by variation selectors, as well as the single code point. Specifically, Figure \ref{fig: saba} (a) can be represented by [U+9BD6, U+E0101] and [U+9BD6, U+E0103] as well as [U+9BD6]. Similarly, other Chinese characters can be represented in at least two ways keeping their appearances. These choices are different as Unicode sequences, but their appearances are the same. \textsc{Variantmark} replaces U+9BD6 with [U+9BD6, U+E0101], and the validator can detect the watermark by checking the variation selectors. More specifically, \textsc{Variantmark} replaces every other occurrence of a Chinese character with one with a variation selector. The validator can detect the watermark by counting the number of consecutive two Chinese characters that have and do not have a variational selector (Figure \ref{fig: variantmark}). Note that we do not replace all of the Chinese characters but every other occurrence because natural texts have some variational selectors, and false positive could happen if \textsc{Variantmark} adopted the same strategy as \textsc{Whitemark}. Alternating patterns do not appear naturally, so the watermarks can be detected robustly. \textsc{Variantmark} can also be used for steganography. As natural texts have some variational selectors, secret messages should be encoded with an error correction code in the preprocessing.

Figure \ref{fig: saba_comparison} shows an example of execution. The original text and watermarked text look the same, but the Unicode sequences are different. The validator can detect the watermark by checking the variation selectors.

\subsection{Watermarks for Printed Texts (Printmark)}

\begin{figure}[tb]
  \centering
  \includegraphics[width=0.4\hsize]{./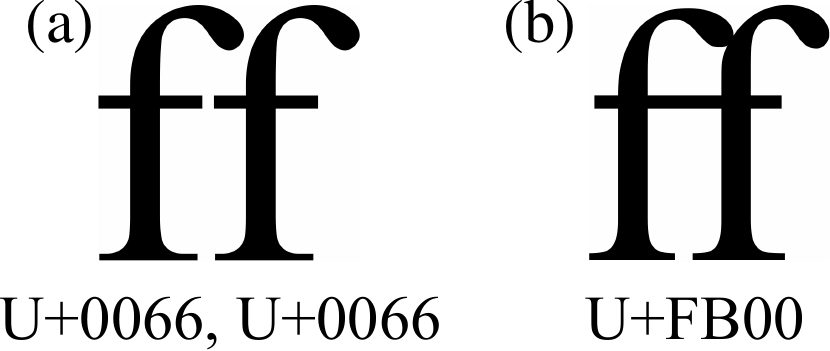}
  \caption{An example of ligature. (a) ff without ligature (b) ff with ligature. We can specify ligature with Unicode.}
  \label{fig: ligatures}
\end{figure}

\begin{figure}[tb]
  \centering
  \includegraphics[width=0.7\hsize]{./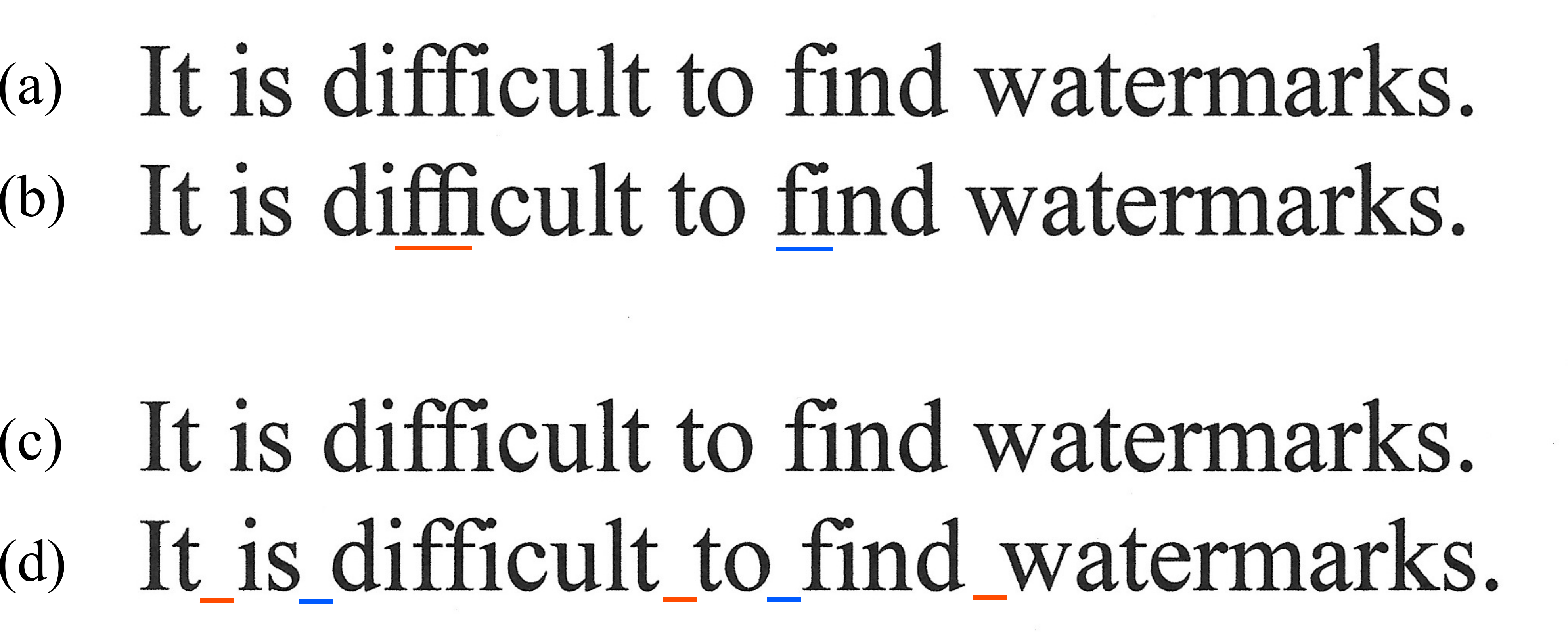}
  \caption{An example of execution of \textsc{Printmark}. We printed the texts out and scanned it to create the above image. (a) The original text. (b) The watermarked text with ligature. The red underline indicates ligature and the blue underline indicates non-ligature. (c) The original text. (d) The watermarked text with three-per-em spaces. The red underlines indicate space (U+0020), and the blue underlines indicate three-per-em spaces (U+2004).}
  \label{fig: print}
\end{figure}

One of the limitations of \textsc{Whitemark} is that it disappears when a texts is printed because the absolute lengths of U+2004 depend on the font and texts with U+2004 look normal once printed. Therefore, if an essay is printed, the teacher cannot detect \textsc{Whitemark}.

We propose several methods, which we call \textsc{Printmark}, to cope with printed texts. The first idea is to use ligature (Figure \ref{fig: ligatures}). \textsc{Printmark} replaces every other occurrence of a substring that can be expressed as ligature with that with ligature. We do not replace all of the substrings but every other occurrence because natural texts have some ligatures, and the false positive could happen if \textsc{Printmark} adopted the same strategy as \textsc{Whitemark}. The second idea is to use whitespaces with slightly different lengths. \textsc{Printmark} replaces every other occurrence of a whitespace with a three-per-em space (U+2004). Although this slightly changes the appearance, the changes are hardly perceptible. A validator can detect the watermark by checking the pattern of whitespaces. The third idea is to use variant characters of Chinese characters. For example, \textsc{Printmark} replaces U+9BD6 with [U+9BD6, U+E0100] (i.e., Figure \ref{fig: saba} (a) to (b)). Although this slightly changes the appearance, the meaning of a text does not change, and most users will not notice anything. Figure \ref{fig: print} shows examples of \textsc{Printmark}. The appearances are hardly changed, but they are indeed changed, and the validator can detect the watermark.

\subsection{Limitation} \label{sec: limitation}

The critical limitation of \textsc{Whitemark} is that it can be bypassed by replacing all whitespaces with the basic whitespace U+0020, then the validator can no longer detect the watermark. We argue that this limitation does not undermine the value of \textsc{Whitemark}.

First, most of end users are not familiar with the specifications of Unicode. For example, high school students do not know the meaning of different code points and hardly notice nor cope with the difference between U+0020 and U+2004. The does not need to achive the perfect recall. Once some student is caught by a teacher for writing an essay using an LLM, other students would refrain from taking the risk of using LLMs for their essays.

Second, the false negative is a universal problem of watermarking, as we will formally show in the next subsection. \textit{Any} watermarking method has this drawback. Therefore, the criticism that the watermark can be erased is not specific to \textsc{Easymark} but is valid for all watermarking methods, including elaborated ones. As this is inevitable in principle, practitioners will all the more hesitate to adopt complicated methods, and \textsc{Easymark} is a practical compromise solution. 

\subsection{Impossibility Theorem} \label{sec: impossibility}

We show that it is impossible to construct a perfect watermark.

\begin{theorem}[Impossibility Theorem, Informal]
  There exists a universal erasing function that erases any reliable watermark without much degradation of the quality of the text.
\end{theorem}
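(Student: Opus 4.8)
The plan is to exhibit a single erasing function that defeats every reliable watermark simultaneously, by reducing the problem to the unavoidable tension between a low false-positive rate on natural text and high recall on watermarked text. I would model a watermark detector as a map $D$ from texts to $\{\texttt{True}, \texttt{False}\}$ and call it \emph{reliable} when its false-positive rate on natural language is small, i.e. $\Pr_{x \sim p_{\textup{nat}}}[D(x) = \texttt{True}] \le \delta_{\textup{FP}}$ for some small $\delta_{\textup{FP}}$, mirroring the quantity already used in Proposition~\ref{prop: detect}. The key object is a \emph{universal rephraser} $E$ that, given any text $x$, returns a fresh text carrying the same meaning, sampled from the natural distribution restricted to that meaning. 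Such an $E$ preserves quality essentially by construction, since its output is itself a genuine natural text conveying the same content.

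First I would formalize semantic equivalence by partitioning the text space into meaning classes and defining $E(x)$ to sample from $p_{\textup{nat}}(\cdot \mid \mathrm{class}(x))$. The central step is distributional: because any meaning-preserving watermark leaves the content intact, the watermarked texts occupy the same meaning classes as natural texts, so the pushforward of the watermarked distribution under $E$ coincides with (or is close to) $p_{\textup{nat}}$ itself. Consequently, for \emph{any} reliable detector $D$,
\begin{align}
  \Pr[D(E(x)) = \texttt{True}] \le \delta_{\textup{FP}},
\end{align}
so the erased text evades detection with probability at least $1 - \delta_{\textup{FP}}$. This bound is uniform over $D$, which is exactly what makes $E$ universal: no matter how elaborate the watermark, its erased output is statistically indistinguishable from authentic natural text, and flagging it would force the detector to pay the same false-positive cost on genuine writing. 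Equivalently, a detector that fires on $E$-outputs must have large $\delta_{\textup{FP}}$, contradicting reliability.

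The main obstacle, and the place where the formal statement must spend its assumptions, is guaranteeing that the meaning-preserving resampler $E$ exists together with a usable notion of quality. One has to make precise what it means for $E(x)$ to share the meaning of $x$ with bounded quality loss, and then argue that sampling from $p_{\textup{nat}}$ within a meaning class does not degrade quality — intuitively clear because natural texts are high quality, but requiring a quality measure that is defined relative to $p_{\textup{nat}}$ and hence invariant within meaning classes. A secondary technical point is the imperfect alignment between the watermarked meaning classes and the support of $p_{\textup{nat}}$; I would absorb this mismatch into a total-variation term and show the detection advantage is bounded by $\delta_{\textup{FP}}$ plus that term. With these ingredients assembled, the impossibility follows cleanly: reliability, encoded as a small $\delta_{\textup{FP}}$, forces erasability.
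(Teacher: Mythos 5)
Your overall strategy --- map the watermarked text back into the distribution of unwatermarked text, so that any detector with a low false-positive rate must also miss the erased text --- is the right intuition, and your meaning-class resampler is essentially the randomized eraser $\texttt{Erase}_q$ of the paper's extension (Theorem \ref{thm: impossibility-ext}), which outputs $x'$ with probability $\Pr[X = x' \mid \tilde{X}_k = x]$. But there is a genuine gap in your central distributional step. You claim that because the watermark preserves meaning, the pushforward of the watermarked distribution under $E$ coincides with $p_{\textup{nat}}$. Resampling within a meaning class only corrects the \emph{within-class} distribution; it does nothing about the \emph{marginal over classes}, and none of the theorem's hypotheses (the $1$-Lipschitz loss, $\mathbb{E}[d_\mathcal{X}(X, X_k)] \le \varepsilon'$) forces the watermarked texts to occupy meaning classes with the same frequencies as natural or unwatermarked text. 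To make the pushforward argument go through, the eraser must use the actual coupling between $X$ and $X_k$, i.e., it must depend on the watermarked distribution $q$. The paper shows this dependence cannot be dispensed with under your ``small false-positive rate'' notion of reliability: the counterexample at the end of Appendix \ref{sec: ext-theorem} exhibits, for any fixed deterministic eraser, an adversarial watermark--detector pair whose false-positive rate is only $1/n$ yet which flags the erased output with probability one. So the exact combination you are aiming for --- a single eraser, independent of the watermarking scheme and its output distribution, defeating every detector that merely has small FPR --- does not hold as stated.

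The paper's main theorem avoids this by assuming the detector \emph{never} fires on unwatermarked output (Eq. \ref{eq: alt-thm-detect2}, probability one rather than $1 - \delta_{\textup{FP}}$), which permits a much simpler deterministic eraser: project $x$ onto the nearest point of the support $\mathcal{S}$ of $X$ under $d_\mathcal{X}$. Anything landing in $\mathcal{S}$ evades detection surely, and quality is controlled because $d_\mathcal{X}(\texttt{Erase}(X_k), X_k) \le d_\mathcal{X}(X, X_k)$ combined with the Lipschitzness of $\mathcal{L}$ yields the bound $\varepsilon + \varepsilon'$. This also repairs the second soft spot in your write-up: you leave quality preservation to the informal claim that natural texts sharing a meaning are equally good, whereas the Lipschitz-plus-metric route gives a quantitative bound without having to define meaning classes or assume the loss is constant on them. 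If you want to keep the resampling eraser, follow the appendix: grant it access to $q$ and define it through the conditional law of $X$ given $\tilde{X}_k$, at which point both the detection and the quality bounds go through under the weaker false-positive assumption.
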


The theorem is formally stated as follows.

\begin{theorem}[Impossibility Theorem, Formal] \label{thm: impossibility}
  Let $(\mathcal{X}, d_\mathcal{X})$ be a metric space of texts. Let $C$ be the random variable that indicates a condition (i.e., prompt). Let $X = f(C)$ be the text generated by an LLM given the condition $C$. Let $g\colon \mathcal{C} \times \mathcal{K} \to \mathcal{X}$ be any watermarking function and $X_k = g(C, k)$ be the text with a watermark with key or random seed $k$, where $\mathcal{C}$ is the space of conditions, and $\mathcal{K}$ is the space of keys. Suppose \begin{align}
    \mathbb{E}[\mathcal{L}(X_k, C) - \mathcal{L}(X, C)] \le \varepsilon \label{eq: alt-thm-loss1}
  \end{align} holds, where $\mathcal{L}\colon \mathcal{X} \times \mathcal{C} \to \mathbb{R}$ is a loss function that is $1$-Lipschitz continuous with respect to the first argument, and $\varepsilon$ is a positive number, i.e., the quality of the text is not degraded much with the watermark. Let $\textup{\texttt{Detect}}\colon \mathcal{X} \times \mathcal{K} \to \{\textup{\texttt{True}}, \textup{\texttt{False}}\}$ be any function such that \begin{align}
    \textup{Pr}[\textup{\texttt{Detect}}(X_k, k) = \textup{\texttt{True}}] &\ge 1 - \delta,  \\
    \textup{Pr}[\textup{\texttt{Detect}}(X, k) = \textup{\texttt{False}}] &= 1, \label{eq: alt-thm-detect2}
  \end{align} i.e., the watermark can be detected reliably. Suppose \begin{align}
    \mathbb{E}[d_\mathcal{X}(X, X_k)] &\le \varepsilon', \label{eq: alt-thm-d}
  \end{align} hold, i.e., the watermark does not change the meaning of the text much. Then, there exists $\textup{\texttt{Erase}}\colon \mathcal{X} \to \mathcal{X}$ such that \begin{align}
    \textup{Pr}[\textup{\texttt{Detect}}(\textup{\texttt{Erase}}(X_k), k) = \textup{\texttt{False}}] &= 1, \label{eq: alt-thm-erase} \\
    \mathbb{E}[\mathcal{L}(\textup{\texttt{Erase}}(X_k), C) - \mathcal{L}(X, C)] &\le \varepsilon + \varepsilon', \label{eq: alt-thm-erase-loss1}
  \end{align} i.e., the watermark can be erased without harming the quality of the text and without knowing the key $k$, and $\textup{\texttt{Erase}}$ is universal in the sense that it does not depend on $g$, $k$, \textup{\texttt{Detect}}, or prompts but only on $X$.
\end{theorem}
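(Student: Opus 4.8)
The plan is to build \texttt{Erase} as a deterministic \emph{projection onto natural texts}, so that it never needs to know anything about the watermark. Let $\mathcal{N} = \mathrm{supp}(X)$ denote the set of texts the LLM produces \emph{without} a watermark (aggregated over all prompts, so it is a fixed global object rather than something tied to a particular $C$). I would define $\texttt{Erase}(x)$ to return a point of $\mathcal{N}$ that is closest to $x$ under the metric, $\texttt{Erase}(x) \in \argmin_{y \in \mathcal{N}} d_\mathcal{X}(x, y)$ (or an approximate minimizer when the infimum is not attained). The only global information this map consults is $\mathcal{N}$, which is common knowledge about language and is \emph{independent of} $g$, $k$, \texttt{Detect}, and the prompt; hence it is universal in exactly the sense required, and it does not use the key.

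The first thing to check is the erasure guarantee, Eq.~\ref{eq: alt-thm-erase}. By construction $\texttt{Erase}(X_k) \in \mathcal{N}$, i.e. the output is always an unwatermarked text. The hypothesis Eq.~\ref{eq: alt-thm-detect2} says the detector never fires on unwatermarked text; read pointwise, this gives $\texttt{Detect}(y, k) = \texttt{False}$ for every $y \in \mathcal{N}$ and every key $k$. Since $\texttt{Erase}(X_k)$ lands in $\mathcal{N}$, we get $\texttt{Detect}(\texttt{Erase}(X_k), k) = \texttt{False}$ with probability one, for the true (unknown) key, which is precisely Eq.~\ref{eq: alt-thm-erase}.

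Next I would control the quality, Eq.~\ref{eq: alt-thm-erase-loss1}. The crucial observation is that the realized clean text $X = f(C)$ itself lies in $\mathcal{N}$, so the projection distance is bounded by the distance to $X$: $d_\mathcal{X}(\texttt{Erase}(X_k), X_k) \le d_\mathcal{X}(X, X_k)$. Since $\mathcal{L}$ is $1$-Lipschitz in its first argument, $\mathcal{L}(\texttt{Erase}(X_k), C) \le \mathcal{L}(X_k, C) + d_\mathcal{X}(\texttt{Erase}(X_k), X_k) \le \mathcal{L}(X_k, C) + d_\mathcal{X}(X, X_k)$. Subtracting $\mathcal{L}(X, C)$, taking expectations, and plugging in the quality bound $\mathbb{E}[\mathcal{L}(X_k, C) - \mathcal{L}(X, C)] \le \varepsilon$ from Eq.~\ref{eq: alt-thm-loss1} together with the fidelity bound $\mathbb{E}[d_\mathcal{X}(X, X_k)] \le \varepsilon'$ from Eq.~\ref{eq: alt-thm-d} gives $\mathbb{E}[\mathcal{L}(\texttt{Erase}(X_k), C) - \mathcal{L}(X, C)] \le \varepsilon + \varepsilon'$, as required. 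The entire estimate is thus a one-line consequence of the triangle inequality and the Lipschitz property.

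I expect the main obstacle to be rigor at two soft points, not the algebra. The first is the measure-theoretic reading of the zero-false-positive hypothesis: the clean argument wants $\texttt{Detect}(\cdot, k) = \texttt{False}$ for \emph{every} $y \in \mathcal{N}$ and \emph{every} $k$, whereas Eq.~\ref{eq: alt-thm-detect2} is only a probability-one statement over the joint law of $(X, k)$. Upgrading ``almost every'' to ``every'' (equivalently, arguing that $\texttt{Erase}(X_k)$ lands in the common good set $\bigcap_k \{\, y : \texttt{Detect}(y, k) = \texttt{False}\,\}$ despite the coupling between $X_k$ and $k$) is where care is needed, and it is immediate under the natural stipulation that a reliable detector never flags genuine text. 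The second is existence and measurability of the nearest-natural-text map: in a general metric space the infimum over $\mathcal{N}$ need not be attained, so I would either assume $\mathcal{N}$ is closed (or boundedly compact) or replace the exact projection by an $\eta$-approximate minimizer, which costs an extra $\eta$ in Eq.~\ref{eq: alt-thm-erase-loss1} that can be driven to zero. An appealing alternative construction is to resample a fresh natural text from the posterior law of $X$ given $X_k$, but I would avoid it here because that posterior depends on $g$, which would violate the universality requirement.
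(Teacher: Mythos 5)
Your proposal is correct and follows essentially the same route as the paper: the paper's \texttt{Erase} is exactly the nearest-point projection onto the support $\mathcal{S}$ of $X$, the erasure guarantee follows from Eq.~\ref{eq: alt-thm-detect2} holding pointwise on that support, and the loss bound is obtained by the identical decomposition (Lipschitzness, then $d_\mathcal{X}(\texttt{Erase}(X_k), X_k) \le d_\mathcal{X}(X, X_k)$ since $X \in \mathcal{S}$, then Eqs.~\ref{eq: alt-thm-loss1} and \ref{eq: alt-thm-d}). Your added remarks on attainment of the minimizer and on upgrading the probability-one detection hypothesis to a pointwise statement on the support are reasonable refinements of points the paper glosses over in its discrete setting, but they do not change the argument.
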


The proof is available in Appendix \ref{sec: proof}. 

In practice, \texttt{Erase}(x) can be approximately simulated by translating $x$ into French and back to English by DeepL or Google Translate. This does not change the meaning of the text but erases the watermark.

We assume that the watermark does not change the meaning of the text much in Theorem \ref{thm: impossibility}, i.e., Eq. \ref{eq: alt-thm-d}. This is the case for most watermarking methods, including those proposed by \cite{kirchenbauer2023watermark} and \cite{takezawa2023necessary} and \textsc{Easymark}. Note that Eq. \ref{eq: alt-thm-d} is automatically met if the loss $\mathcal{L}(\cdot, c)$ is unimodal and both $X$ and $X_k$ incur low losses because $X$ and $X_k$ are in the same basin and close to each other in this case. We show that this assumption is necessary for the theorem to hold in the following by showing a counterexample.

\textbf{Counterexample.} We show that Theorem \ref{thm: impossibility} does not hold if a watermark does not meet Eq. \ref{eq: alt-thm-d}, i.e., it does not care about the metric. Let $\mathcal{X} = \{x_1, x_2, x_3\}, \mathcal{C} = \{c_1, c_2\}$, and let $C$ follow the uniform distribution on $\mathcal{C}$. Let the loss function be \begin{align}
  \mathcal{L}(x_1, c_1) &= 0, & \mathcal{L}(x_2, c_1) &= \infty, & \mathcal{L}(x_3, c_1) &= 0, \\
  \mathcal{L}(x_1, c_2) &= \infty, & \mathcal{L}(x_2, c_2) &= 0, & \mathcal{L}(x_3, c_2) &= 0.
\end{align} Let the generated texts be \begin{align}
  f(c_1) &= x_1, & f(c_2) &= x_2, & g(c_1, k) &= x_3, & g(c_2, k) &= x_3.
\end{align} Let the detection function be \begin{align}
  \texttt{Detect}(x_1, k) &= \texttt{False}, & \texttt{Detect}(x_2, k) &= \texttt{False}, & \texttt{Detect}(x_3, k) &= \texttt{True}.
\end{align} With the above conditions, \begin{align}
  \mathcal{L}(X, C) &= 0, \\
  \mathcal{L}(X_k, C) &= 0, \\
  \texttt{Detect}(X, k) &= \texttt{False}, \text{and} \\
  \texttt{Detect}(X_k, k) &= \texttt{True}
\end{align} hold with probability one, i.e., the watermark is perfect. However, there is not a good erasing function $\texttt{Erase}\colon \mathcal{X} \to \mathcal{X}$ for this watermark. If $\texttt{Erase}(x_3) = x_1$, then under $C = c_2$, \begin{align}
  \mathcal{L}(\texttt{Erase}(X_k), C) &= \mathcal{L}(\texttt{Erase}(x_3), c_2) \\
  &= \mathcal{L}(x_1, c_2) \\
  &= \infty,
\end{align} and if $\texttt{Erase}(x_3) = x_2$, then under $C = c_1$, \begin{align}
  \mathcal{L}(\texttt{Erase}(X_k), C) &= \mathcal{L}(\texttt{Erase}(x_3), c_1) \\
  &= \mathcal{L}(x_2, c_1) \\
  &= \infty,
\end{align} and if $\texttt{Erase}(x_3) = x_3$, then \begin{align}
  \texttt{Detect}(\texttt{Erase}(X_k), k) &= \texttt{Detect}(\texttt{Erase}(x_3), k) \\
  &= \texttt{Detect}(x_3, k) \\
  &= \texttt{True}.
\end{align}

\textbf{Remark (Implications)} Theorem \ref{thm: impossibility} is interesting in its own right because Theorem \ref{thm: impossibility}, its assumptions, and the above counterexample tell the promising directions of watermarking methods. As Theorem \ref{thm: impossibility} shows, it is impossible to design a perfect watermark as long as the watermark does not change the meaning of the text. Therefore, if one aims at designing unbreakable watermarks, the watermark should be designed to change the meaning of the text while it should be a reasonable answer to the prompt, and one should focus on tasks the loss of which is multimodal, like the above counterexample. For example, if there are two reasonable but different answers $x_1$ and $x_2$ for a prompt $c$, and the LLM generates $x_1$ for $c$, then an unbreakable watermark should output $x_2$ instead of similar texts to $x_1$. This is not the case for tasks such as summarization and translation because all correct answers to a prompt are usually similar to each other. Exploring tasks where multimodality exists and watermarking methods that exploit the multimodality are promising directions. Essay writing is one such a direction because there are many reasonable answers for a single condition $c$, and watermarking is indeed important for essay writing. Alternatively, one can also exploit the assumption of Theorem \ref{thm: impossibility} that the detection function should recognize the text without a watermark almost surely, which is necessary for the theorem to hold. We discuss it in detail in Appendix \ref{sec: ext-theorem}. This indicates that watermarks that are difficult to erase can be designed if we make a trade-off between the detection accuracy and the erasing difficulty. As stated above, Theorem \ref{thm: impossibility} reveals directions that are doomed to failure and provides guidance on fruitful directions.

In summary, Theorem \ref{thm: impossibility} is valuable in three ways. First, it motivates us to use an easy watermarking method like \textsc{Easymark}. Second, it tells us not to rely too much on watermarking methods. Practitioners should be aware of the existence of watermark erasers and should not be overconfident about the certainty of watermarking however sophisticated the method is. Third, it provides guidance on fruitful directions to design theoretically sound watermarks.

\section{Experiments} \label{sec: experiment}

\begin{table}[t]
  \centering
  \caption{BLEU scores and detection accuracy with NLLB-200-3.3B and WMT.}
  \label{table: translation}
  \begin{tabular}{lcccccc}
  \toprule
   & \multicolumn{3}{c}{En $\rightarrow$ De} & \multicolumn{3}{c}{De $\rightarrow$ En}  \\
   & BLEU $\uparrow$ & FNR $\downarrow$ & FPR $\downarrow$ & BLEU $\uparrow$ & FNR $\downarrow$ & FPR $\downarrow$ \\
  \midrule
  w/o Watermark             & $\mathbf{36.4}$ & n.a. & n.a. & $\mathbf{42.6}$ & n.a. & n.a. \\
  Soft-Watermark \citep{kirchenbauer2023watermark}           &  $5.2$ & $3.0 \%$ & $0.4 \%$ & $7.5$ & $3.3 \%$ & $0.5 \%$ \\
  Adaptive Soft-Watermark  & $20.5$ & $\mathbf{0.0} \%$ & $2.6 \%$ & $20.6$ & $\mathbf{0.0} \%$ & $1.9 \%$ \\
  NS-Watermark \citep{takezawa2023necessary}             & $32.7$ & $\mathbf{0.0} \%$ & $0.3 \%$ & $38.2$ & $\mathbf{0.0} \%$ & $\mathbf{0.0} \%$ \\
  \textsc{Whitemark} (Ours) & $\mathbf{36.4}$ & $0.1 \%$ & $\mathbf{0.0} \%$ & $\mathbf{42.6}$ & $\mathbf{0.0} \%$ & $\mathbf{0.0} \%$ \\
  \bottomrule
  & & & & \\
  \toprule
   & \multicolumn{3}{c}{En $\rightarrow$ Fr} & \multicolumn{3}{c}{Fr $\rightarrow$ En}  \\
   & BLEU $\uparrow$ & FNR $\downarrow$ & FPR $\downarrow$ & BLEU $\uparrow$ & FNR $\downarrow$ & FPR $\downarrow$ \\
  \midrule
  w/o Watermark             & $\mathbf{42.6}$ & n.a. & n.a. & $\mathbf{40.8}$ & n.a. & n.a. \\
  Soft-Watermark \citep{kirchenbauer2023watermark}           &  $9.6$ & $5.4 \%$ & $0.3 \%$ & $7.6$ & $3.6 \%$ & $0.6 \%$ \\
  Adaptive Soft-Watermark  & $23.3$ & $\mathbf{0.0} \%$ & $2.2 \%$ & $19.5$ & $\mathbf{0.0} \%$ & $2.8 \%$ \\
  NS-Watermark \citep{takezawa2023necessary}             & $38.8$ & $\mathbf{0.0} \%$ & $0.3 \%$ & $36.8$ & $\mathbf{0.0} \%$ & $0.1 \%$ \\
  \textsc{Whitemark} (Ours) & $\mathbf{42.6}$ & $\mathbf{0.0} \%$ & $\mathbf{0.0} \%$ & $\mathbf{40.8}$ & $\mathbf{0.0} \%$ & $\mathbf{0.0} \%$ \\
  \bottomrule
  \end{tabular}
\end{table}

\begin{table}[t]
  \centering
  \caption{Text quality and detection accuracy with LLaMA-7B and C4 dataset.}
  \label{table: ppl_vs_accuracy}
  \begin{tabular}{lccc}
  \toprule
  & PPL $\downarrow$ & FNR $\downarrow$ & FPR $\downarrow$ \\
  \midrule
  w/o Watermark              & $\mathbf{1.85}$ & n.a. & n.a. \\
  Soft-Watermark  \citep{kirchenbauer2023watermark}           & $6.25$ & $2.8 \%$ & $0.1 \%$ \\
  Adaptive Soft-Watermark   & $2.48$ & $0.2 \%$ & $0.8 \%$ \\
  NS-Watermark \citep{takezawa2023necessary}              & $1.92$ & $\mathbf{0.0} \%$ & $0.3 \%$ \\
  \textsc{Whitemark} (Ours)  & $\mathbf{1.85}$ & $\mathbf{0.0} \%$ & $\mathbf{0.0} \%$ \\
  \bottomrule
  \end{tabular}
\end{table}

We confirm the effectiveness of our proposed method with two tasks and two LLMs. We compared our method with the following watermarking methods with the same hyperparameter settings used by \cite {takezawa2023necessary}:

\begin{itemize}
  \item The Soft-Watermark \citep[Algorithm 2]{kirchenbauer2023watermark} adds biases to the logits of specific words and detects the watermark by checking the ratio of the biased words.
  \item NS-Watermark \citep{takezawa2023necessary} follows the same idea but precisely controls the false positive ratio by dynamic programming.
  \item Adaptive Soft-Watermark \citep{takezawa2023necessary} is a variant of Soft-Watermark that controls the false positive ratio by binary search for each input.
\end{itemize}

The first task is machine translation. We used NLLB-200-3.3B \citep{costajussa2022no} as the language model and used the test dataset of WMT'14 French (Fr) $\leftrightarrow$ English (En) \citep{bojar2014findings} and WMT'16 German (De) $\leftrightarrow$ English (En) \citep{bojar2016findings}. We report the BLEU scores of the translation results by the NLLB model and the texts watermarked by the above and our methods. We also report the false negative ratio (FNR) and the false positive ratio (FPR) of the detection function. The FNR is the ratio of the texts with watermarks that are not detected, and the FPR is the ratio of the texts without watermarks that are detected. Table \ref{table: translation} shows that \textsc{Whitemark} consistently performs better than the other methods. \textsc{Whitemark} outperforms the state-of-the-art watermarking method NS-Watermark with $10$ percent relative improvements of BLEU and is more reliable in terms of detection accuracy. We emphasize again that \textsc{Whitemark} is much easier to implement and deploy than NS-Watermark. \textsc{Whitemark} is perfect in the sense that the BLEU scores of the watermarked texts are the same as those of the original texts, and the FNR and FPR are almost zero. There are only four false negative examples, each of which contains only one word, in which case we do not need to claim that the text was generated by an LLM. This indicates that \textsc{Whitemark} can add a watermark without harming the quality of the text and can detect the watermark reliably. These results are consistent with Propositions \ref{prop: bleu} and \ref{prop: detect}.

The second task is text completion. We used LLaMA-7B \citep{touvron2023llama} as the language model and used the subsets of C4, realnews-like dataset \citep{raffel2020exploring}. We followed the experimental setups used by \cite{kirchenbauer2023watermark} and \cite{takezawa2023necessary}. Specifically, we split each text into 90 percent and 10 percent lengths and input the first into the language model. We computed the perplexity (PPL) of the generated text by the LLaMA model and texts watermarked by the above and our methods. We computed the perplexity by an auxiliary language model, which is regarded as an oracle to measure the quality of the text. We also report the FNR and FPR of the detection function. Table \ref{table: ppl_vs_accuracy} shows that \textsc{Whitemark} consistently performs better than the other methods in terms of both PPL and detection accuracy.

These results are impressive because even \textsc{Whitemark} can achieve the almost perfect watermarking performance. These results indicate that future watermarking methods should not focus only on BLEU, PPL, FNR, and FPR, and use these metrics only for sanity checks.

\section{Related Work}

With the recent progress of LLMs, the demand for detecting whether a text is generated by an LLM is increasing. There are two main ways of doing this. The first approach is blackbox detection \citep{gehrmann2019gltr,uchendu2020authorship,gambin2022pushing}, which does not require intervention in the model. These methods exploit the statistical tendency of texts generated by LLMs \citep{mitchell2023detectgpt,guo2023how}. However, as the LLMs become more sophisticated, the statistical tendency becomes less obvious, and the blackbox detection becomes less reliable \citep{clark2021all,jakesch2023human,schuster2020limitatinos}. Another approach is whitebox detection, which requires intervention in the model, including inference-time watermarking \citep{kirchenbauer2023watermark,takezawa2023necessary,christ2023undetectable} and post-hoc watermarking as \cite{he2022protecting} and \cite{venugopal2011watermarking} and ours, retrieval-based detection \citep{krishna2023paraphrasing}, and linguistic steganography \citep{fang2017generating,dai2019towards,ziegler2019neural,ueoka2021frustratingly}. Watermarking methods are sometimes used for detecting model extraction attacks \citep{he2022cater, peng2023are, zhao2023protecting,gu2022watermarking}. Although watermarking methods are more reliable than blackbox detection \citep{kirchenbauer2023reliability}, the main drawback is that it harms the quality of the text. Our method is reliable while it does not harm the quality of the text, as Propositions \ref{prop: bleu} and \ref{prop: detect} and the experiments show.

\textsc{Easymark} can be implemented as a user-side system \citep{sato2022private, sato2022retrieving}. \cite{sato2022clear} pointed out that ``Even if a user of the service is unsatisfied with a search engine and is eager to enjoy additional functionalities, what he/she can do is limited. In many cases, he/she continues to use the unsatisfactory system or leaves the service.'' and proposed a user-side realization method to solve this problem. The spirit of \textsc{Easymark} is the same. Even if the official LLM provider does not offer watermarking, users can use \textsc{Easymark} to add a watermark to a text. This is a practical advantage of \textsc{Easymark}.

The idea of using Unicode encoding for watermarking is not new \citep{por2012unispach,rizzo2016content,rizzo2017text}. The differences between our work and the existing Unicode-based watermarks are two-fold. First, the existing works are not in the context of LLMs, and tackle different problems. \textsc{Easymark} is designed as simple as possible so that it can be easily implemented and deployed with LLMs. Second, we provided theoretical justifications in Propositions \ref{prop: bleu} and \ref{prop: detect} and Theorem \ref{thm: impossibility}. These results are valuable in their own right as we discussed in Section \ref{sec: impossibility}. It would be an interesting future direction to extend our theoretical results to contexts other than LLMs, including those tackled in the existing Unicode-based watermarks.

Finally, \cite{sadasivan2023can} also showed the impossibility of a perfect watermark. The results of \citep{sadasivan2023can} also justify the use of simple watermarks like \textsc{Easymark}. The difference between the theory shown in \cite{sadasivan2023can} and ours is that we assume general loss functions and elucidate in which cases the watermark can be erased more precisely. Our positive and negative results can be used for designing watermarks as discussed in Section \ref{sec: impossibility}.

\section{Conclusion}

We proposed \textsc{Easymark}, a family of embarrassingly easy watermarking methods. \textsc{Easymark} is simple and easy to implement and deploy. Nevertheless, \textsc{Easymark} has preferable theoretical properties that ensure the quality of the text and the reliability of the watermark. The simplicity and the theoretical properties make \textsc{Easymark} attractive for practitioners. We also proved that it is impossible to construct a perfect watermark and any watermark can be erased. This result is valuable in its own right because it motivates us to use an easy watermarking method like \textsc{Easymark}, encourages us not to rely too much on watermarking methods, and provides guidance on fruitful directions to design theoretically sound watermarks. We confirmed the effectiveness of \textsc{Easymark} with the experiments involving LLM generated texts. \textsc{Easymark} outperforms the state-of-the-art watermarking methods in terms of BLEU and perplexity and is more reliable in terms of detection accuracy. We encourage practitioners to use \textsc{Easymark} as a starter for watermarking methods and recommend LLM researchers use \textsc{Easymark} as a simple yet strong baseline.



\subsubsection*{Acknowledgments}
Yuki Takezawa, Ryoma Sato, and Makoto Yamada were supported by JSPS KAKENHI Grant Number 23KJ1336, 21J22490, and MEXT KAKENHI Grant Number 20H04243, respectively.

\bibliography{main}
\bibliographystyle{tmlr}

\appendix
\section{Proof of Theorem \ref{thm: impossibility}} \label{sec: proof}

\begin{proof}
  Let \begin{align}
    \mathcal{S} \stackrel{\text{def}}{=} \{x \in \mathcal{X} \mid \text{Pr}[X = x] > 0\} \subset \mathcal{X}
  \end{align} be the support of $X$, i.e., the set of possible texts generated by $f$. Let $\texttt{Erase}\colon \mathcal{X} \to \mathcal{X}$ be \begin{align}
    \texttt{Erase}(x) \stackrel{\text{def}}{=} \argmin_{x' \in \mathcal{S}} d_\mathcal{X}(x, x').
  \end{align} Ties can be broken arbitrarily. We show that \texttt{Erase} satisfies the conditions. Take any watermark $g$, and key $k$. From Eq. \ref{eq: alt-thm-detect2}, for any $x \in \mathcal{S}$, $\texttt{Erase}(x) = \texttt{False}$ holds. As $\texttt{Erase}(x) \in \mathcal{S}$, $\texttt{Detect}(\texttt{Erase}(x)) = \texttt{False}$ holds surely. Therefore, Eq. \ref{eq: alt-thm-erase} holds. Besides, \begin{align}
    &\mathbb{E}[\mathcal{L}(\textup{\texttt{Erase}}(X_k), C) - \mathcal{L}(X, C)]\\
    &= \mathbb{E}[\mathcal{L}(\textup{\texttt{Erase}}(X_k), C) - \mathcal{L}(X_k, C)] + \mathbb{E}[\mathcal{L}(X_k, C) - \mathcal{L}(X, C)] \\
    &\stackrel{(a)}{\le} \mathbb{E}[d_\mathcal{X}(\textup{\texttt{Erase}}(X_k), X_k)] + \mathbb{E}[\mathcal{L}(X_k, C) - \mathcal{L}(X, C)] \\
    &\stackrel{(b)}{\le} \mathbb{E}[d_\mathcal{X}(\textup{\texttt{Erase}}(X_k), X_k)] + \varepsilon \\
    &\stackrel{(c)}{\le} \mathbb{E}[d_\mathcal{X}(X, X_k)] + \varepsilon \\
    &\stackrel{(d)}{\le} \varepsilon + \varepsilon'
  \end{align} hold, where (a) is due to the Lipschitzness of $\mathcal{L}$, (b) is due to Eq. \ref{eq: alt-thm-loss1}, (c) is due to the definition of \texttt{Erase}, and (d) is due to Eq. \ref{eq: alt-thm-d}. Therefore, Eq. \ref{eq: alt-thm-erase-loss1} holds, and \texttt{Erase} satisfies the conditions.
\end{proof}

\section{Extention of Theorem \ref{thm: impossibility}} \label{sec: ext-theorem}

We assume that the text without a watermark can be detected almost surely, i.e., \begin{align}
  \textup{Pr}[\textup{\texttt{Detect}}(X, k) = \textup{\texttt{False}}] &= 1
\end{align} in Theorem \ref{thm: impossibility}. We relax this assumption in the following. Let $q(x)$ be the probability mass function of $X_k$. We show that the assumption can be loosened if we allow \texttt{Erase} to depend on $q$. The same \texttt{Erase} can be used for different watermarks $g$ and $k$ as long as $q$ is the same. $q$ can be observed and estimated by a user by drawing samples from watermarked texts. Therefore, dependence on $q$ is a reasonable assumption.

\begin{theorem} \label{thm: impossibility-ext}
  Let $(\mathcal{X}, d_\mathcal{X})$ be a metric space of texts. Let $C$ be the random variable that indicates a condition (i.e., prompt). Let $X = f(C)$ be the text generated by an LLM given the condition $C$. Let $g\colon \mathcal{C} \times \mathcal{K} \to \mathcal{X}$ be any watermarking function and $X_k = g(C, k)$ be the text with a watermark with key $k$, where $\mathcal{C}$ is the space of conditions, and $\mathcal{K}$ is the space of keys. Let $q(x)$ be the probability mass function of $X_k$. Suppose \begin{align}
    \mathbb{E}[\mathcal{L}(X_k, C) - \mathcal{L}(X, C)] \le \varepsilon \label{eq: ext-thm-loss1}
  \end{align} holds, where $\mathcal{L}\colon \mathcal{X} \times \mathcal{C} \to \mathbb{R}$ is a loss function that is $1$-Lipschitz continuous with respect to the first argument, and $\varepsilon$ is a positive number, i.e., the quality of the text is not degraded much with the watermark. Let $\textup{\texttt{Detect}}\colon \mathcal{X} \times \mathcal{K} \to \{\textup{\texttt{True}}, \textup{\texttt{False}}\}$ be any function such that \begin{align}
    \textup{Pr}[\textup{\texttt{Detect}}(X_k, k) = \textup{\texttt{True}}] &\ge 1 - \delta,  \\
    \textup{Pr}[\textup{\texttt{Detect}}(X, k) = \textup{\texttt{False}}] &\ge 1 - \delta, \label{eq: ext-thm-detect2}
  \end{align} i.e., the watermark can be detected reliably. Suppose \begin{align}
    \mathbb{E}[d_\mathcal{X}(X, X_k)] &\le \varepsilon', \label{eq: ext-thm-d}
  \end{align} hold, i.e., the watermark does not change the meaning of the text much. Then, there exists a randomized function $\textup{\texttt{Erase}}\colon \mathcal{X} \to \mathcal{X}$ such that \begin{align}
    \textup{Pr}[\textup{\texttt{Detect}}(\textup{\texttt{Erase}}(X_k), k) = \textup{\texttt{False}}] &\ge 1 - \delta, \label{eq: ext-thm-erase} \\
    \mathbb{E}[\mathcal{L}(\textup{\texttt{Erase}}(X_k), C) - \mathcal{L}(X, C)] &\le \varepsilon + \varepsilon', \label{eq: ext-thm-erase-loss1}
  \end{align} i.e., the watermark can be erased without harming the quality of the text and without knowing the key $k$, and $\textup{\texttt{Erase}}$ is universal in the sense that it does not depend on $g$, $k$, \textup{\texttt{Detect}}, or propmts but only on $X$ and $q$.
\end{theorem}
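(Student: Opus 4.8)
The plan is to mimic the proof of Theorem~\ref{thm: impossibility}, but to replace the deterministic nearest-point projection onto the support of $X$ by a \emph{randomized} transport map. Under the original assumption the unwatermarked text was never detected, so any point in the support of $X$ was safe and a deterministic projection sufficed; under the relaxed assumption~\eqref{eq: ext-thm-detect2} this is no longer true. The guiding idea is instead to make the erased text have \emph{exactly the same law} as the unwatermarked text $X$, so that the non-detection guarantee for $X$ transfers verbatim to the erased text.

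First I would set up a transport problem. Write $p$ for the law of $X$ and recall that $q$ is the law of $X_k$. The joint law of $(X, X_k)$ generated by the shared condition $C$ (and key $k$) is a coupling of $p$ and $q$, and by~\eqref{eq: ext-thm-d} its expected cost under $d_\mathcal{X}$ is at most $\varepsilon'$. Hence the optimal transport cost $W(p,q) = \inf_\gamma \mathbb{E}_{(Y,Z)\sim\gamma}[d_\mathcal{X}(Y,Z)]$, the infimum being over couplings $\gamma$ of $p$ and $q$, satisfies $W(p,q) \le \varepsilon'$. Crucially, an optimal (or near-optimal) coupling $\gamma^\star$ can be selected using only the marginals $p$ and $q$, so it is independent of $g$, $k$, and \texttt{Detect}. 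I would then define the randomized map $\texttt{Erase}(x)$ to return a sample from the conditional distribution $\gamma^\star(\,\cdot \mid Z = x)$; this depends only on its input $x$ and on the pair $(p, q)$, meeting the universality requirement.

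Next I would verify the two conclusions. For~\eqref{eq: ext-thm-erase}, the output $\texttt{Erase}(X_k)$ has marginal law equal to the first marginal of $\gamma^\star$, namely $p$, i.e.\ it is distributed exactly as $X$; with the key fixed this gives $\textup{Pr}[\texttt{Detect}(\texttt{Erase}(X_k), k) = \texttt{False}] = \textup{Pr}[\texttt{Detect}(X, k) = \texttt{False}] \ge 1 - \delta$ by~\eqref{eq: ext-thm-detect2}. For~\eqref{eq: ext-thm-erase-loss1} I would split
\begin{align}
\mathbb{E}[\mathcal{L}(\texttt{Erase}(X_k), C) - \mathcal{L}(X, C)]
&= \mathbb{E}[\mathcal{L}(\texttt{Erase}(X_k), C) - \mathcal{L}(X_k, C)] \nonumber \\
&\quad + \mathbb{E}[\mathcal{L}(X_k, C) - \mathcal{L}(X, C)] ,
\end{align}
bound the second summand by $\varepsilon$ via~\eqref{eq: ext-thm-loss1}, and bound the first by the $1$-Lipschitzness of $\mathcal{L}$ to get $\mathbb{E}[d_\mathcal{X}(\texttt{Erase}(X_k), X_k)]$, which is exactly the cost of $\gamma^\star$ and hence at most $W(p,q) \le \varepsilon'$, yielding the total bound $\varepsilon + \varepsilon'$.

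The main obstacle is reconciling universality with the non-detection guarantee. A deterministic map cannot in general push $q$ forward to $p$, so randomization is genuinely needed; and although the scheme-induced coupling certifies a small transport cost, it depends on $g$ and $k$, which is forbidden. The resolution is the optimal-transport coupling, which attains the same cost using only $p$ and $q$. I would also need to ensure such a coupling exists (on a Polish or discrete $\mathcal{X}$ with metric cost the infimum is attained, otherwise take an $\eta$-approximate coupling and let $\eta \to 0$), and I would treat $k$ as fixed so that, as the surrounding discussion notes, a single \texttt{Erase} serves every scheme sharing the law $q$ and the marginal identity $\texttt{Erase}(X_k) \sim X$ cleanly delivers the detection bound.
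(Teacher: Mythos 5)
Your proof is correct and takes essentially the same route as the paper: both construct the randomized \texttt{Erase} by disintegrating a coupling of the law $p$ of $X$ and the law $q$ of $X_k$ whose transport cost is at most $\varepsilon'$, so that $\texttt{Erase}(X_k)$ has exactly the law of $X$ (giving the detection bound via Eq.~\ref{eq: ext-thm-detect2}) and the Lipschitz/decomposition argument gives the loss bound. The only difference is cosmetic: you take an optimal (or near-optimal) transport coupling determined by $(p,q)$, whereas the paper uses the coupling $(X,\tilde X_k)$ induced by a reference watermarking scheme $\tilde g,\tilde k$ realizing $q$ with $\mathbb{E}[d_\mathcal{X}(X,\tilde X_k)]\le\varepsilon'$; both couplings have cost at most $\varepsilon'$ and depend only on the marginals in the sense required for universality.
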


\begin{proof}
  Let $q$ be any probability mass function on $\mathcal{X}$ such that there exist $\tilde{g}$ and $\tilde{k}$ such that $\tilde{X}_k = \tilde{g}(C, \tilde{k})$ follows $q$ and \begin{align}
    \mathbb{E}[d_\mathcal{X}(X, \tilde{X}_k)] &\le \varepsilon'. \label{eq: ext-thm-dtilde}
  \end{align} Take any such $\tilde{g}$ and $\tilde{k}$. Let \begin{align}
    \text{Pr}[\texttt{Erase}_q(x) = x'] \stackrel{\text{def}}{=} \text{Pr}[X = x' \mid \tilde{X}_k = x]. \label{eq: ext-thm-erase-def}
  \end{align} We show that \texttt{Erase} satisfies the conditions. Take any watermark $g$ and key $k$ such that $X_k = g(C, k)$ follows $q$ and satisfy the assumptions of Theorem \ref{thm: impossibility-ext}. From Eq. \ref{eq: ext-thm-erase-def}, $\texttt{Erase}_q(X_k)$ follows the same distribution as $X$, and therefore, \begin{align}
    \textup{Pr}[\textup{\texttt{Detect}}(\texttt{Erase}_q(X_k), k) = \textup{\texttt{False}}] &\ge 1 - \delta \\
  \end{align} holds due to Eq. \ref{eq: ext-thm-detect2}. Besides, \begin{align}
    &\mathbb{E}[\mathcal{L}(\texttt{Erase}_q(X_k), C) - \mathcal{L}(X, C)]\\
    &= \mathbb{E}[\mathcal{L}(\texttt{Erase}_q(X_k), C) - \mathcal{L}(X_k, C)] + \mathbb{E}[\mathcal{L}(X_k, C) - \mathcal{L}(X, C)] \\
    &\stackrel{(a)}{\le} \mathbb{E}[d_\mathcal{X}(\texttt{Erase}_q(X_k), X_k)] + \mathbb{E}[\mathcal{L}(X_k, C) - \mathcal{L}(X, C)] \\
    &\stackrel{(b)}{\le} \mathbb{E}[d_\mathcal{X}(\texttt{Erase}_q(X_k), X_k)] + \varepsilon \\
    &= \left(\sum_{x' \in \mathcal{X}} \sum_{x_k \in \mathcal{X}} d_\mathcal{X}(x', x_k) \text{Pr}[\texttt{Erase}(x_k) = x'] q(x_k) \right) + \varepsilon \\
    &\stackrel{(c)}{=} \left( \sum_{x' \in \mathcal{X}} \sum_{x_k \in \mathcal{X}} d_\mathcal{X}(x', x_k) \text{Pr}[X = x' \mid \tilde{X}_k = x_k] q(x_k) \right) + \varepsilon \\
    &\stackrel{(d)}{=} \mathbb{E}[d_\mathcal{X}(X, \tilde{X}_k)] + \varepsilon \\
    &\stackrel{(e)}{\le} \varepsilon + \varepsilon',
  \end{align} hold, where (a) is due to the Lipschitzness of $\mathcal{L}$, (b) is due to Eq. \ref{eq: ext-thm-loss1}, (c) is due to the definition of $\texttt{Erase}_q$, i.e., Eq. \ref{eq: ext-thm-erase-def}, (d) follows the fact that $X_k$ also follows $q$, and (e) is due to Eq. \ref{eq: ext-thm-dtilde}. Therefore, Eq. \ref{eq: ext-thm-erase-loss1} holds, and \texttt{Erase} satisfies the conditions.
\end{proof}

The dependence on $q$ is necessary. We show that there are no universal erasing functions that satisfy the conditions without depending on $q$ by a counterexample.

\textbf{Counterexample.} Let $\mathcal{X} = \{0, 1, 2, \ldots, n\}$, $\mathcal{C} = \{1, 2, \ldots, n\}$, and $C$ follow the uniform distribution on $\mathcal{C}$. Let the language model be $f(c) = c \in \{1, 2, \ldots, n\}$. Let the loss function be zero everywhere. Take any erasing function $\texttt{Erase}\colon \mathcal{X} \to \mathcal{X}$. We show that there exists an adversarial watermark $g, k$ such that \texttt{Erase} cannot erase the watermark.

Case 1 ($\texttt{Erase}(0) = 0$): Let $g(c, k) = 0$ and \begin{align}
  \texttt{Detect}(x, k) &\stackrel{\text{def}}{=} \begin{cases}
    \texttt{True} & (x = 0) \\
    \texttt{False} & (\text{otherwise})
  \end{cases},
\end{align} then \begin{align}
  &\text{Pr}[\texttt{Detect}(X, k) = \texttt{True}] = \text{Pr}[X = 0] = 0, \\
  &\text{Pr}[\texttt{Detect}(X_k, k) = \texttt{True}] = \text{Pr}[X_k = 0] = 1, \\
  &\text{Pr}[\texttt{Detect}(\texttt{Erase}(X_k), k) = \texttt{True}] = \text{Pr}[\texttt{Detect}(0) = \texttt{True}] = 1,
\end{align} i.e., \texttt{Erase} fails to erase the watermark.

Case 2 ($\texttt{Erase}(0) = i^* \neq 0$): Let $g(c, k) = 0$ and \begin{align}
  \texttt{Detect}(x) &\stackrel{\text{def}}{=} \begin{cases}
    \texttt{True} & (x \in \{0, i^*\}) \\
    \texttt{False} & (\text{otherwise})
  \end{cases},
\end{align} then \begin{align}
  &\text{Pr}[\texttt{Detect}(X, k) = \texttt{True}] = \text{Pr}[X \in \{0, i^*\}] = \frac{1}{n}, \\
  &\text{Pr}[\texttt{Detect}(X_k, k) = \texttt{True}] = \text{Pr}[X_k \in \{0, i^*\}] = 1, \\
  &\text{Pr}[\texttt{Detect}(\texttt{Erase}(X_k), k) = \texttt{True}] = \text{Pr}[\texttt{Detect}(i^*) = \texttt{True}] = 1,
\end{align} i.e., \texttt{Erase} fails to erase the watermark.

\end{document}